
\documentclass[osajnl,twocolumn,showpacs,superscriptaddress,10pt]{revtex4-1} 
\usepackage{amsmath,amssymb,graphicx}

\usepackage{subfigure}
\usepackage{epsfig}
\usepackage{amsthm}

\newtheorem{lem}{Lemma}

\begin{document}

\title{BREN: Body Reflection Essence-Neuter Model for Separation of Reflection Components}

\author{Changsoo Je}\email{Corresponding author: vision@sogang.ac.kr}
\author{Hyung-Min Park}
\affiliation{Department of Electronic Engineering, Sogang University, 35 Baekbeom-ro, Mapo-gu, Seoul 121-742, Republic of Korea}


\begin{abstract}
We propose a novel reflection color model consisting of body essence and (mixed) neuter, and present an effective method for separating dichromatic reflection components using a single image.
Body essence is an entity invariant to interface reflection, and has two degrees of freedom unlike hue and maximum chromaticity.
As a result, the proposed method is insensitive to noise and proper for colors around CMY (cyan, magenta, and yellow) as well as RGB (red, green, and blue), contrary to the maximum chromaticity-based methods.
Interface reflection is separated by using a Gaussian function, which removes a critical thresholding problem.
Furthermore the method does not require any region segmentation.
Experimental results show the efficacy of the proposed model and method.
\end{abstract}

\ocis{(120.5700) Reflection; (330.1720) Color vision; (330.1710) Color, measurement; (330.4595) Optical effects on vision; (150.2950) Illumination; (100.3008) Image recognition, algorithms and filters.}

\maketitle 

%
%
%
%
%
%
%

\noindent
Either extremely diffuse illumination (e.g. hemispherical lighting) or Lambertian reflectance makes diffuse appearance of objects.
However in many real scenes we meet, neither illumination is sufficiently diffuse, nor object surfaces exhibit perfectly Lambertian reflection, as asserted in \cite{Gershon:1987:UCH:1625995.1626029}.
Therefore lots of scene images include specular highlights \cite{Je2013779,Je20122320}, and detecting and separating the specular interface reflection contained in images is an important task for object detection and recognition \cite{Lin_609360}, and modeling surface reflectance.
Since in general, specularity is produced by interface reflection, and body reflection is diffuse, we do not consider highly diffuse interface reflection in this Letter.

Polarization with the Brewster's angle can be used for imaging interface reflection-reduced appearance in the optical process \cite{Brewster01011815}.
Numerous techniques have been developed to detect and to separate interface reflection in the nonoptical process \cite{Artusi_CGF1971}, and many methods have employed the dichromatic reflection model \cite{Shafer_COL5080100409}, which is appropriate for the object surfaces of opaque dielectric materials.
Under the assumption of dichromatic reflection, various color-based methods have been proposed \cite{Bajcsy10.1007/BF00128233,Tan1374865}.
Baiscy et al. presented the {\bf S} space-based color reflection model and a hue-based segmentation algorithm \cite{Bajcsy10.1007/BF00128233}. Their hue-based segmentation algorithm is known erroneous in two adjacent uniform-hue regions whose saturations are different, as discussed in \cite{Tan1374865}.
Tan and Ikeuchi proposed a maximum chromaticity-based separation algorithm \cite{Tan1374865}.
Since their algorithm aligns the maximum chromaticities of two adjacent pixels in the maximum chromaticity intensity space, the remainder of chromaticity (non-maximum chromaticity) may not be well aligned. While colors around RGB (red, green, and blue) have relatively small values in channels of non-maximum chromaticity, colors around CMY (cyan, magenta, and yellow) rather have two comparable chromaticity values.
For example, while a color, $(0.9,0.03,0.07)$, which is close to R, has very small values in G and B channels, another color, $(0.47,0.05,0.48)$, which is close to M, has two comparable values in R and B channels.
Moreover the path from diffuse to specular is nonlinear in their analysis space.
Yang et al. also used the maximum chromaticity, and further introduced the coefficient of variation to make the Ch-CV space for efficient separation of reflection components \cite{Yang2013}. Their algorithm requires segmentation of specular regions, and critically depends on the segmentation result. Therefore if the segmentation is not successful, the result will be undesirable.
Shen and Cai used the modified specular-free image, in which an offset is added to their specular-free image \cite{Shen_AO09_specularity}. In their method, the diffuse component is computed by determining a single parameter that adjusts the specularity level in a whole image. Although their method is efficient, it needs region segmentation (specular/surrounding regions that consist of sufficient number of pixels), and ignores variation of diffuse component in the specular/surrounding regions. In addition, they did not rigorously justify why their simplification, only using the single adjustment parameter in a whole image, makes good results for their input images.

In this Letter, we define mixed reflectance, body neuter, body essence, and (mixed) neuter, and propose a novel reflection color model, BREN (body reflection essence-neuter) model, where mixed reflectance consists of body essence and mixed neuter.
Based on BREN model, we present an effective method for separating dichromatic reflection components using a single image.
Body essence is an entity invariant to interface reflection, and has two degrees of freedom unlike hue and maximum chromaticity.
As a result, the proposed method is insensitive to noise and proper for colors around CMY as well as RGB contrary to the maximum chromaticity-based methods.
The method computes local gradients of mixed neuter and body essence, and interface reflection is separated by using a Gaussian function, which removes a critical thresholding problem.
Moreover the method uses neither any specular-free image nor any modified one, and does not require any region segmentation.



Now we present the body reflection essence-neuter model for separating reflection components.
In the dichromatic reflection model, a reflected irradiance $L$ consists of interface ($\mathsf{f}$) and body ($\mathsf{b}$) reflection components, given as:
\begin{equation}\label{eq:dichromatic1}
L=L_\mathsf{f}+L_\mathsf{b}=m_\mathsf{f}C_\mathsf{f}+m_\mathsf{b}C_\mathsf{b},
\end{equation}
where $m_\mathsf{f}$ and $m_\mathsf{b}$ are the geometric terms, and $C_\mathsf{f}=\left(C_\mathsf{f}^R,C_\mathsf{f}^G,C_\mathsf{f}^B\right)$ and $C_\mathsf{b}=\left(C_\mathsf{b}^R,C_\mathsf{b}^G,C_\mathsf{b}^B\right)$ are the spectral terms.
From Eq.~\ref{eq:dichromatic1}, each channel's irradiance can be given by
\begin{equation}\label{eq:dichromatic2}
L^I=m_\mathsf{f}C_\mathsf{f}^I+m_\mathsf{b}C_\mathsf{b}^I,
\end{equation}
where $I=R,G,B$.
The spectral term can be expressed by illumination $E=(E^R,E^G,E^B)$ and interface/body spectral reflectances $S_\mathsf{f}=(S^N,S^N,S^N)$ and $S_\mathsf{b}=(S^R,S^G,S^B)$, and hence
\begin{equation}\label{eq:dichromatic5}
L^I=m_\mathsf{f}S^NE^I+m_\mathsf{b}S^IE^I.
\end{equation}

We define the mixed reflectance as
\begin{equation}\label{eq:mixed1}
P^I\equiv \frac{L^I}{E^I}=m_\mathsf{f}S^N+m_\mathsf{b}S^I.
\end{equation}
Letting $\tilde{S}^N\equiv m_\mathsf{f}S^N$ (interface reflectance) and $\tilde{S}^I\equiv m_\mathsf{b}S^I$ (body reflectance) gives
\begin{equation}\label{eq:mixed2}
P^I=\tilde{S}^N+\tilde{S}^I.
\end{equation}
Now we define body neuter (non-negative) as
\begin{equation}\label{eq:body_neuter1}
\eta\equiv\frac{1}{3}\sum_J\tilde{S}^J,
\end{equation}
and body essence as
\begin{equation}\label{eq:essence1}
\mathcal{S}^I\equiv\tilde{S}^I-\eta,
\end{equation}
where $I,J=R,G,B$.
For highlight removal, we need a known entity invariant to interface reflection.
From Eqs.~\ref{eq:body_neuter1} and \ref{eq:essence1}, it is known that body essence is invariant to interface reflection (has no portion of interface reflection), and of two degrees of freedom (it is of three channels and zero mean) unlike hue and maximum chromaticity (both, one degree of freedom).

From Eqs.~\ref{eq:mixed2} and \ref{eq:essence1},
\begin{equation}\label{eq:body_neuter2}
P^I=\tilde{S}^N+\eta+\mathcal{S}^I.
\end{equation}
Hence we get
\begin{equation}\label{eq:body_neuter3}
P^I-\mathcal{S}^I=\tilde{S}^N+\eta.
\end{equation}
Since $\tilde{S}^N+\eta$ is constant with respect to $I=R,G,B$, $P^I-\mathcal{S}^I$ is spectrally neutral.
Thus we define that entity as (mixed) neuter,
\begin{equation}\label{eq:mixed_neuter1}
\mathcal{P}\equiv P^I-\mathcal{S}^I,
\end{equation}
and we get Lemma~\ref{lem:mixed_neuter}.
\

\begin{lem}[\bf Mixed neuter]\label{lem:mixed_neuter}
Let $P^I$ and $\mathcal{S}^I$ where $I$ is a spectral channel index (e.g. $I=R,G,B$) be the mixed reflectance and body essence, respectively, under dichromatic reflection assumption.
Then $\mathcal{P}\equiv P^I-\mathcal{S}^I$ is constant with respect to $I$, i.e. spectrally neutral.
\end{lem}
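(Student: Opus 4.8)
\emph{Proof proposal.} The plan is to unwind the definitions introduced in Eqs.~\ref{eq:mixed2}, \ref{eq:body_neuter1}, and \ref{eq:essence1} and show that the difference $P^I-\mathcal{S}^I$ collapses to a quantity with no free channel index. First I would substitute the body essence definition $\mathcal{S}^I\equiv\tilde{S}^I-\eta$ into $P^I-\mathcal{S}^I$, using the decomposition $P^I=\tilde{S}^N+\tilde{S}^I$ from Eq.~\ref{eq:mixed2}. The body reflectance term $\tilde{S}^I$ then cancels, leaving $P^I-\mathcal{S}^I=\tilde{S}^N+\eta$, which is exactly Eq.~\ref{eq:body_neuter3}.

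The second and decisive step is to argue that each of the two surviving summands is independent of $I$. For the body neuter $\eta=\tfrac{1}{3}\sum_J\tilde{S}^J$ this is immediate: the sum runs over all channels $J=R,G,B$, so the expression contains no free index. For the interface reflectance $\tilde{S}^N\equiv m_\mathsf{f}S^N$, I would appeal to the modeling assumption already recorded above Eq.~\ref{eq:dichromatic5}, namely that the interface spectral reflectance is achromatic, $S_\mathsf{f}=(S^N,S^N,S^N)$ (the neutral interface reflection hypothesis of the dichromatic model); hence $\tilde{S}^N$ carries the same value in every channel, and $m_\mathsf{f}$ is a purely geometric scalar. Combining the two observations, $\tilde{S}^N+\eta$ — and therefore $\mathcal{P}=P^I-\mathcal{S}^I$ — takes one and the same value for $I=R,G,B$, which is precisely the assertion of spectral neutrality.

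I do not anticipate a genuine technical obstacle here: the lemma is essentially a bookkeeping consequence of the BREN definitions. The only point that merits explicit mention, rather than silent use, is the achromaticity of the interface reflectance $S_\mathsf{f}$; this is the single substantive hypothesis on which the conclusion rests, and if one instead allowed a wavelength-dependent interface reflectance the neuter would no longer be spectrally flat. I would therefore state that assumption clearly at the start of the proof and then let the one-line cancellation do the rest.
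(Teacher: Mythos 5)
Your proof is correct and follows essentially the same route as the paper's: substitute the definitions, cancel $\tilde{S}^I$, and observe that $\tilde{S}^N+\eta$ carries no free channel index. Your explicit remark that the conclusion rests on the neutral-interface assumption $S_\mathsf{f}=(S^N,S^N,S^N)$ is a welcome clarification that the paper leaves implicit in its notation, but it does not change the argument.
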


\begin{proof}
From the definitions of the mixed reflectance and body essence,
$P^I=\tilde{S}^N+\tilde{S}^I$,
and
$\mathcal{S}^I\equiv\tilde{S}^I-\frac{1}{n_\mathrm{c}}\sum_J\tilde{S}^J$
where $\tilde{S}^N\equiv m_\mathsf{f}S^N$, $\tilde{S}^I\equiv m_\mathsf{b}S^I$, $n_\mathrm{c}$ is the number of color channels, and $J$ is a spectral channel index.
Hence
$P^I-\mathcal{S}^I=\tilde{S}^N+\frac{1}{n_\mathrm{c}}\sum_J\tilde{S}^J$,
thus
$P^I-\mathcal{S}^I$ is constant with respect to $I$, i.e. spectrally neutral.
\end{proof}


From Eq.~\ref{eq:mixed_neuter1}, we meet the core of BREN model, a novel intuitive expression of dichromatic reflection,
\begin{equation}\label{eq:mixed_neuter2}
P^I=\mathcal{P}+\mathcal{S}^I,
\end{equation}
which demonstrates that the mixed reflectance is the sum of the body essence and (mixed) neuter.

Mixed neuter can be easily computed from a mixed reflectance. Lemma~\ref{lem:compute_mixed_neuter} presents how to compute the mixed neuter given a mixed reflectance.

\begin{lem}[\bf Computation of mixed neuter]\label{lem:compute_mixed_neuter}
Let $P^I$ and $\mathcal{S}^I$ where $I$ is a spectral channel index (e.g. $I=R,G,B$) be the mixed reflectance and body essence, respectively, under dichromatic reflection assumption.
Then $\mathcal{P}\equiv P^I-\mathcal{S}^I$ is the mean of $P^I$ with respect to $I$, i.e.
\begin{equation}\label{eq:mixed_neuter3}
\mathcal{P}=\frac{1}{n_\mathrm{c}}\sum_I P^I,
\end{equation}
where $n_\mathrm{c}$ is the number of color channels.
\end{lem}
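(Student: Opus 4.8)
The plan is to compute the channel-average of the core identity \eqref{eq:mixed_neuter2}, $P^I = \mathcal{P} + \mathcal{S}^I$, and exploit the fact that body essence has zero mean by construction. First I would average both sides over the channel index $I$: since $\mathcal{P}$ is independent of $I$ (this is exactly Lemma~\ref{lem:mixed_neuter}), averaging it over the $n_\mathrm{c}$ channels returns $\mathcal{P}$ itself. Then I would show that $\frac{1}{n_\mathrm{c}}\sum_I \mathcal{S}^I = 0$: substituting the definition $\mathcal{S}^I \equiv \tilde{S}^I - \frac{1}{n_\mathrm{c}}\sum_J \tilde{S}^J$ gives $\frac{1}{n_\mathrm{c}}\sum_I \tilde{S}^I - \frac{1}{n_\mathrm{c}}\sum_I \left(\frac{1}{n_\mathrm{c}}\sum_J \tilde{S}^J\right)$, and the second term's inner sum is constant in $I$ so the outer average leaves it unchanged, canceling the first term. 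Combining these two facts yields $\frac{1}{n_\mathrm{c}}\sum_I P^I = \mathcal{P} + 0 = \mathcal{P}$, which is the claimed equation \eqref{eq:mixed_neuter3}.

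Alternatively, and perhaps more directly, I would start from $\mathcal{P} = P^I - \mathcal{S}^I$ (the definition in \eqref{eq:mixed_neuter1}), substitute $P^I = \tilde{S}^N + \tilde{S}^I$ and $\mathcal{S}^I = \tilde{S}^I - \eta$ with $\eta = \frac{1}{n_\mathrm{c}}\sum_J \tilde{S}^J$, to obtain $\mathcal{P} = \tilde{S}^N + \frac{1}{n_\mathrm{c}}\sum_J \tilde{S}^J$. Separately, averaging $P^I = \tilde{S}^N + \tilde{S}^I$ over $I$ gives $\frac{1}{n_\mathrm{c}}\sum_I P^I = \tilde{S}^N + \frac{1}{n_\mathrm{c}}\sum_I \tilde{S}^I$, since $\tilde{S}^N = m_\mathsf{f}S^N$ does not depend on $I$. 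The two right-hand sides are visibly identical (the summation index $I$ versus $J$ is immaterial), which establishes the result.

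There is essentially no obstacle here; the statement is a one-line consequence of the zero-mean property built into the definition of body essence in \eqref{eq:essence1}, together with Lemma~\ref{lem:mixed_neuter}. The only thing to be mildly careful about is bookkeeping with the nested summations — making sure that when one averages $\frac{1}{n_\mathrm{c}}\sum_J \tilde{S}^J$ (a quantity already independent of the outer index) over $I$, it is reproduced rather than rescaled. I would present the second approach in the write-up, since it is self-contained and mirrors the proof style already used for Lemma~\ref{lem:mixed_neuter}, making the exposition uniform.
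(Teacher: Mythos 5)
Your first argument is exactly the paper's proof: sum the definition $\mathcal{P}\equiv P^I-\mathcal{S}^I$ over $I$, use that $\mathcal{P}$ is constant in $I$ and that body essence has zero mean by construction. The second variant you prefer is an equivalent unpacking of the same facts, so the proposal is correct and takes essentially the same approach as the paper.
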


\begin{proof}
Summation of Eq.~\ref{eq:mixed_neuter1} with respect to $I$ gives $n_\mathrm{c}\mathcal{P}=\sum_I P^I-\sum_I \mathcal{S}^I$. Since body essence is of zero mean as presented earlier (Eqs.~\ref{eq:body_neuter1} and \ref{eq:essence1}), $\mathcal{P}=\frac{1}{n_\mathrm{c}}\sum_I P^I$.
\end{proof}

Actually Lemma 2 implies Lemma 1. Since the mixed neuter is equivalent to the mean of mixed reflectance with respect to $I$, it is constant with respect to $I$.
According to Eqs.~\ref{eq:mixed_neuter2} and \ref{eq:mixed_neuter3}, body essence can be directly calculated from the mixed reflectance:
\begin{equation}\label{eq:essence_comp}
\mathcal{S}^I=P^I-\mathcal{P},
\end{equation}
and so it can be considered a known entity if the mixed reflectance is known (this is one of usual assumptions in separation of reflection components from a single image).

We assume that the illumination is known or properly estimated.
One trivial scheme to estimate illumination from a single image is averaging each channel's intensity for all pixels. With Eq.~\ref{eq:mixed1}, the mixed reflectance is calculated for each pixel from the RGB intensities and illumination.
Then based on the mixed reflectance, the mixed neuter and body essence are calculated for each pixel by Eqs.~\ref{eq:mixed_neuter3} and \ref{eq:essence_comp}.



In most cases, only a small portion of an image has specular highlights.
Therefore we mostly do not need to consider all pixels in an image for highlight removal.
Consequently we consider only high-neuter pixels, practically, pixels whose mixed neuter is larger than a threshold.
If body reflectance ($\tilde{S}^I$) is constant in a finite region, in the region, $\Delta\tilde{S}^N=\Delta P^I$ from Eq.~\ref{eq:mixed2}, $\eta$ and $\mathcal{S}^I$ are constant from Eqs.~\ref{eq:body_neuter1} and \ref{eq:essence1}, hence $\Delta\tilde{S}^N=\Delta P^I=\Delta\mathcal{P}$ from Eq.~\ref{eq:mixed_neuter2}.
That is, reducing the mixed neuter is equivalent to reducing the interface reflection component for a region of constant body reflectance.
For that reason, we use the mixed neuter to reduce the interface reflectance since interface reflectance is unknown.
For each high-neuter pixel, we consider iterative highlight removal, given as
\begin{equation}\label{eq:neuter_demotion}
\mathcal{P}_{k+1}=\mathcal{P}_k+\Delta\mathcal{P}_k,
\end{equation}
where $k$ denotes the iteration number.
The above equation iteratively reduces the mixed neuter (say, neuter demotion) whenever $\Delta\mathcal{P}_k<0$.
$\Delta\mathcal{P}_k$ is determined as follows.
For the high-neuter pixel, we compute the gradients of mixed neuter along the eight-connected pixels:
\begin{equation}\label{eq:neuter_gradient}
\Delta_i\mathcal{P}=\mathcal{P}(\mathbf{p}_i)-\mathcal{P}(\mathbf{p}),
\end{equation}
where $\mathbf{p}$ is the high-neuter pixel location, and $\mathbf{p}_i$ is an eight-connected pixel location of $\mathbf{p}$.
One may choose other types of neighborhood relations (e.g. four-connectivity) instead of eight-connectivity, tradeoffing the computational cost and possibility of finding good neighbors.

We assume that if $\mathcal{S}^I$ is constant in a region, $\tilde{S}^I$ is also constant in the region (hence so is $\eta$).
Therefore we use $\mathcal{S}^I$ to evaluate the closeness of any two body reflectances since $\tilde{S}^I$ is unknown. Table~\ref{tab:reflection} shows constancies and changes of related entities in changes of shading and specularity (interface reflection). From the table, it is known that in specularity change, unlike in shading change, body essence is completely suitable for identifying body reflection colors rather than chromaticity (or a portion of it such as hue and maximum chromaticity) is.
\begin{table}
\caption{Change and constancy of reflection from a surface with constant reflectance}
\centering 
\begin{tabular}{ c c c | c c c | c c c }
  \hline
   &  &  &  & Shading change &  &  & Specularity change &  \\
  \hline

  \hline
   & Constancy &  &  & Chromaticity &  &  & Body essence &  \\

  \hline
   & Change &  &  & Luminance &  &  & Mixed neuter &  \\

  \hline
\end{tabular}\label{tab:reflection}
 \end{table}
Three entities, hue, maximum chromaticity, and body essence are all invariant to interface reflection.
However body essence has two degrees of freedom by its definition while hue and maximum chromaticity have only one degree of freedom.
Therefore, body essence provides more information of body reflection than the other two entities do.
Since body essence is of two degrees of freedom, it is more insensitive to noise than hue and maximum chromaticity are.
Furthermore, body essence is proper for colors around CMY as well as RGB, contrary to maximum chromaticity, which by its definition cannot contain multiple comparable values from color channels simultaneously.

Among the eight-connected pixels, we only consider ones whose mixed neuter is smaller than the mixed neuter of the current pixel, and compute the gradients of body essence along the eight-connected pixels:
\begin{equation}\label{eq:essence_gradient}
\Delta_i\mathcal{S}=\mathcal{S}(\mathbf{p}_i)-\mathcal{S}(\mathbf{p}),
\end{equation}
where $\mathcal{S}=\left(\mathcal{S}^R,\mathcal{S}^G,\mathcal{S}^B\right)$.
Then we set $\Delta\mathcal{P}_k$ for the minimum of gradient of mixed neuter weighted by a Gaussian of the essence similarity:
\begin{equation}\label{eq:neuter_demotion2}
\Delta\mathcal{P}_k=\min_{i\in\left\{i|\Delta_i\mathcal{P}_k<0\right\}} e^{-\lambda\left\|\Delta_i\mathcal{S}_k\right\|^2}\Delta_i\mathcal{P}_k.
\end{equation}
%
%
%
The above Gaussian function replaces a conventional thresholding operation, and provides more appropriate use of body reflectance similarity in highlight removal.

We have tested the proposed method for various images, and provide results of five input images.
Three input images (Head, Fish, and Toys) are downloaded from a webpage of R. T. Tan (\href{http://php-robbytan.rhcloud.com/code.html}{http://php-robbytan.rhcloud.com/code.html}), and the other two images (Dinosaur and Mickey-ball) are newly captured.

\begin{figure}
\centering
\subfigure[]{\epsfig{file=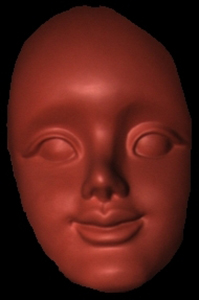,height=2.48cm}}
\subfigure[]{\epsfig{file=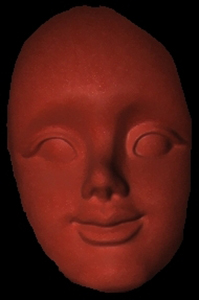,height=2.48cm}}
\subfigure[]{\epsfig{file=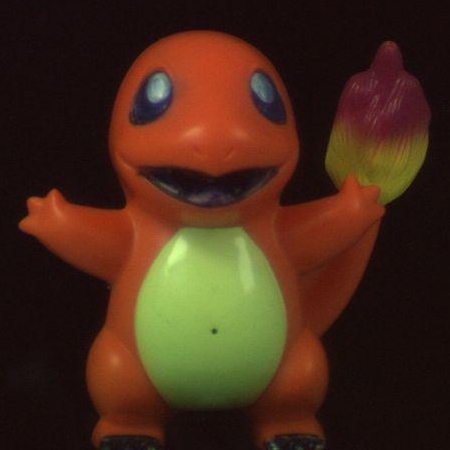,height=2.48cm}}
\subfigure[]{\epsfig{file=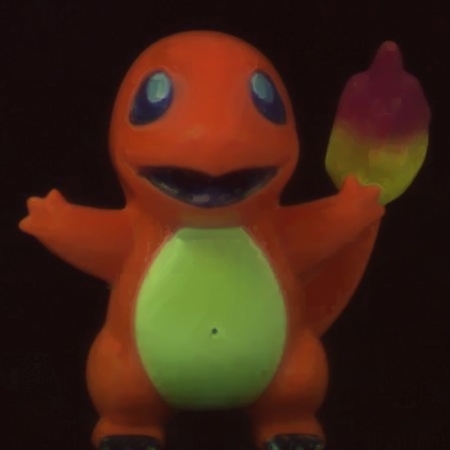,height=2.48cm}}
\caption{{\bf Results of Head and Dinosaur.} (a) Input Head image (R. T. Tan's) and (b) the result. (c) Input Dinosaur image and (d) the result.}\label{fig:result_head_dinosaur}
\end{figure}


Figure~\ref{fig:result_head_dinosaur}a--b show the result of a single-colored object, the Head. We can see that the specular component is adequately removed.
Figure~\ref{fig:result_head_dinosaur}c--d show the result of the Dinosaur. Even though there is smooth color gradation in its tail, the specular component is removed quite well without undesirable artifact.

\begin{figure}
\centering
\subfigure[]{\epsfig{file=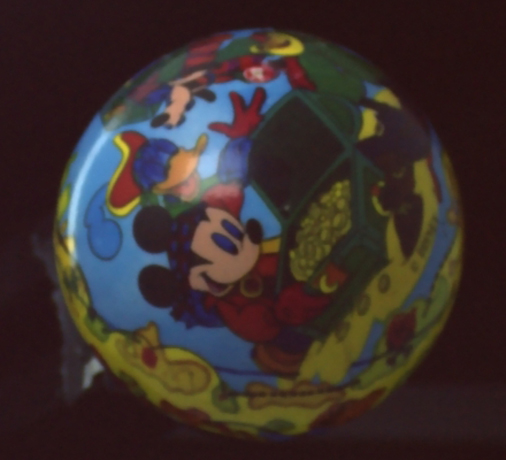,height=3.8cm}}
\subfigure[]{\epsfig{file=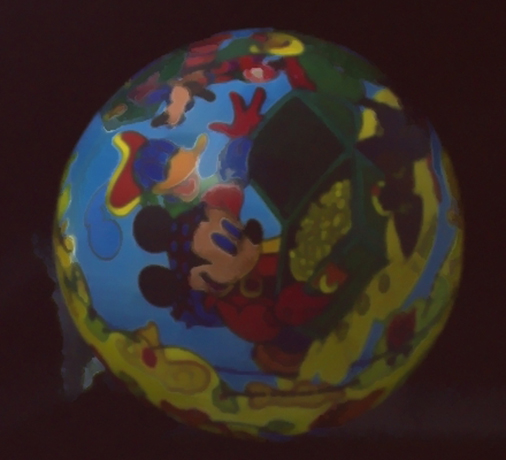,height=3.8cm}}
\caption{{\bf Result of Mickey-ball.} (a) Input image and (b) result.}\label{fig:result_globe}
\end{figure}

\begin{figure}
\centering
\subfigure[]{\epsfig{file=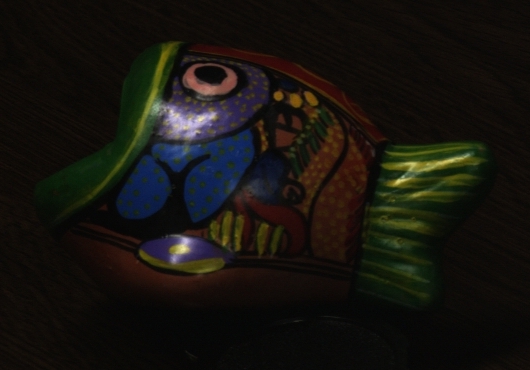,height=2.9cm}}
\subfigure[]{\epsfig{file=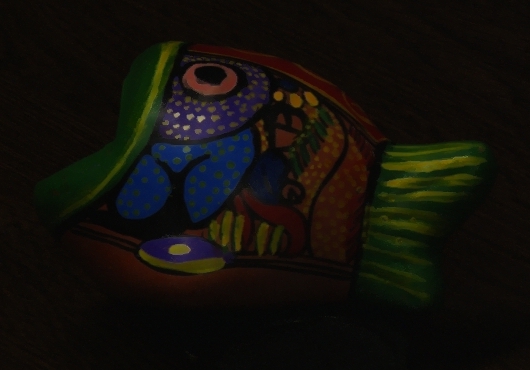,height=2.9cm}}
\caption{{\bf Result of Fish.} (a) Input image (R. T. Tan's) and (b) result.}\label{fig:result_fish}
\end{figure}

\begin{figure}
\centering
\subfigure[]{\epsfig{file=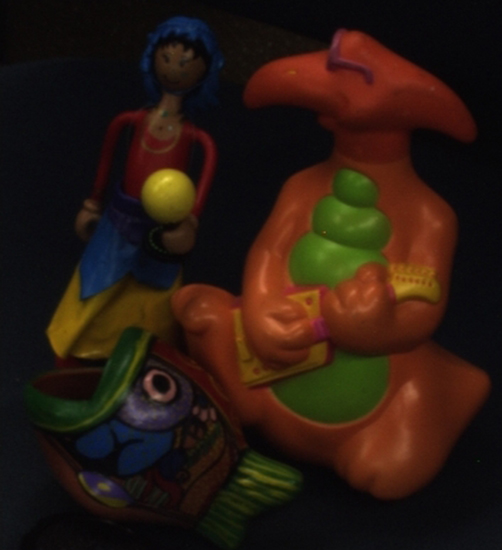,height=4.6cm}}
\subfigure[]{\epsfig{file=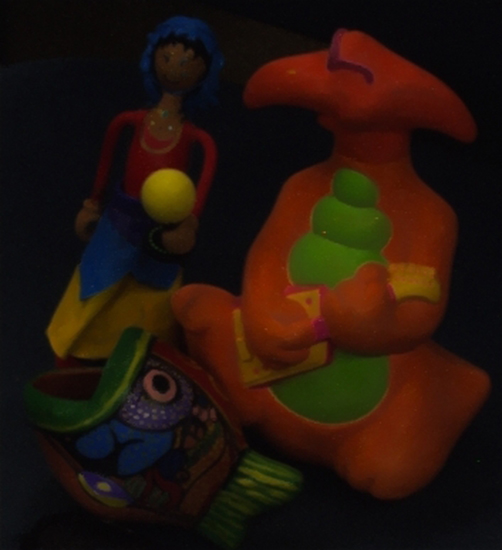,height=4.6cm}}
\caption{{\bf Result of Toys.} (a) Input image (R. T. Tan's) and (b) result.}\label{fig:result_toys}
\end{figure}

Figures~\ref{fig:result_globe}, \ref{fig:result_fish} and \ref{fig:result_toys} show the results of more complicated scenes, the Mickey-ball, Fish, and Toys, respectively. Despite their high complexity of colors and textures, the body reflection components are properly estimated.


We presented a novel reflection color model (BREN) and an effective method for separating dichromatic reflection components using a single image.
We showed body essence is a better entity for specular highlight removal than hue and maximum chromaticity are.
The Gaussian coefficient generalizes a conventional simple thresholding scheme, and it provides detailed use of body color similarity.
The proposed method does not require any region segmentation, and thus it does not depend on segmentation accuracy.


%
%
%

\

The authors are grateful to Professor Robby T. Tan for his images we have used here.
This research was supported by Basic Science Research Program through the National Research Foundation of Korea (NRF) funded by the Ministry of Education (No. 2012R1A1A2009138).

\bibliographystyle{osajnl}
\bibliography{reflection}

%
%
%
%

\end{document}